\renewcommand{\c}{\mathbf{c}}
\newcommand{\e}{\mathbf{e}}
\newcommand{\MH}{\mathbf{H}}
\newcommand{\K}{\mathbf{K}}
\newcommand{\w}{\mathbf{w}}
\newcommand{\x}{\mathbf{x}}
\newcommand{\y}{\mathbf{y}}
\newcommand{\z}{\mathbf{z}}
\newcommand{\0}{\mathbf{0}}
\newcommand{\mc}[1]{\mathcal{#1}}
\newcommand{\mb}[1]{\mathbb{#1}}
\newcommand{\mf}[1]{\mathbf{#1}}
\newtheorem{ex}{Example}
\newtheorem{assum}{Assumption}
\newcommand{\alg}{{ACPC-OTA-FL}}
\def\BibTeX{{\rm B\kern-.05em{\sc i\kern-.025em b}\kern-.08em
    T\kern-.1667em\lower.7ex\hbox{E}\kern-.125emX}}
\begin{document}

\title{Over-the-Air Federated Learning with Joint Adaptive Computation and Power Control
\thanks{This work is supported in part by NSF-2112471 (AI-EDGE).}
}


\author{\IEEEauthorblockN{1\textsuperscript{st} Given Name Surname}
\IEEEauthorblockA{\textit{dept. name of organization (of Aff.)} \\
\textit{name of organization (of Aff.)}\\
City, Country \\
email address or ORCID}
}

\author{Haibo Yang, Peiwen Qiu, Jia Liu and Aylin Yener
\\Dept. of Electrical and Computer Engineering, The Ohio State University
\\ \{yang.9292, qiu.617\}@osu.edu, liu@ece.osu.edu, yener@ece.osu.edu
}

\maketitle


\begin{abstract}
This paper considers over-the-air federated learning (OTA-FL). OTA-FL exploits the superposition property of the wireless medium, and performs model aggregation over the air for free. Thus, it can greatly reduce the communication cost incurred in communicating model updates from the edge devices. In order to fully utilize this advantage while providing comparable learning performance to conventional federated learning that presumes model aggregation via noiseless channels, we consider the joint design of transmission scaling and the number of local iterations at each round, given the power constraint at each edge device. We first characterize the training error due to such channel noise in OTA-FL by establishing a fundamental lower bound for general functions with Lipschitz-continuous gradients. Then, by introducing an adaptive transceiver power scaling scheme, we propose an over-the-air federated learning algorithm with joint adaptive computation and power control (ACPC-OTA-FL). We provide the convergence analysis for ACPC-OTA-FL in training with non-convex objective functions and heterogeneous data. We show that the convergence rate of ACPC-OTA-FL matches that of FL with noise-free communications.
\end{abstract}



\section{Introduction} \label{sec:intro}

In recent years, advances in machine learning (ML) have achieved astonishing successes in many applications that transform our society, e.g., in computer vision, natural language processing, and robotics.
Traditionally, ML training tasks often reside in cloud-based large data-centers that process training data in a centralized fashion. 
However, due to the rapidly increasing demands for training data, high latency and costs of data transmissions, as well as data privacy/security concerns, aggregating all data to the cloud for ML training is unlikely to remain feasible. To address these challenges, federated learning (FL) \cite{mcmahan2017communication} has recently
emerged as a prevailing distributed ML paradigm. 
FL employs multiple clients, typically deployed over wireless edge networks, to locally train a learning model and exchange only intermediate updates between the server and clients.
FL provides better avenues for privacy protection by avoiding the transmission of local data, while also being able to leverage parallel clients computation for training speedup.

However, FL also inherits many design challenges of distributed ML.
One of the main challenges of FL stems from the communication constraint in the iterative FL learning process, particularly in resource (bandwidth and power)-limited wireless FL systems~\cite{yang2019flconcept,mcmahan2021fl,niknam2020federated}.
To receive the update information from multiple clients in each round, the conventional wisdom is to use orthogonal spectral or temporal channels for each client and avoid interference among the clients.
However, this is neither desirable (since as the number of clients increases, the available rate per edge device decreases, lengthening the communication duration), nor necessary (since only the aggregated model updates is needed at the server) in FL. 

Over-the-air FL (OTA-FL) has recently emerged as an effective approach in that it exploits the superposition property of the wireless medium to perform model aggregation ``for free'' by allowing simultaneous transmission of all clients' updates \cite{abari2016over,yang2020federated,zhu2021over}.
Specifically, under OTA-FL, the server directly recovers a noisy aggregation of the clients' model updates that transmit in the same spectral-temporal channel, rather than trying to decode each client's model update first in orthogonal spectral or temporal channels. 
As a result, OTA-FL dramatically reduces the communication costs and overheads from collecting the update from each client, and accordingly enjoys better {\em communication parallelism} regardless of the number of clients.

However, despite of the aforementioned salient features in terms of communication efficiency, several issues remain in OTA-FL.
First, the convergence analysis of OTA-FL often assumes noise-free communications (see Section~\ref{sec:related} for more in-depth discussions).
Moreover, existing works on OTA-FL have not considered {\it data heterogeneity}, (i.e., datasets among clients are non-i.i.d. and with unbalanced sizes) and {\it system heterogeneity} (i.e., the computation and communication capacities varies among clients and could be time-varying~\cite{yang2019flconcept,mcmahan2021fl} simultaneously).
Therefore, a fundamental question in OTA-FL system design is: {\it how to develop an efficient OTA-FL training algorithm that can handle both data and system heterogeneity under noisy channels.}


In this paper, we answer this question by studying the impact of channel noise on OTA-FL and proposing an over-the-air federated learning algorithm with joint adaptive computation and power control (ACPC-OTA-FL) for edge devices with heterogeneous capabilities.
Our main contributions are summarized as follows:

\begin{list}{\labelitemi}{\leftmargin=1em \itemindent=0em \itemsep=.2em}

\item We first characterize the training error of the conventional OTA-based FedAvg algorithm\cite{mcmahan2017communication} by establishing a lower bound of the convergence error under Gaussian multiple access channels (MAC) for general functions with Lipschitz continuous gradients.
Our lower bound indicates that there is a non-vanishing convergence error due to channel noise compared with those convergence results of OTA-FL under the noise-free assumption.
This insight motivates us to propose our {\alg} algorithm that considers local computation and power control co-design to best utilize the power resources at the edge devices. 


\item Our {\alg} algorithm allows each client to (in a distributed manner) adaptively determine its {\bf transmission power level and number of local update steps}  to fully utilize the computation and communication resources.
We show that, even with non-i.i.d. and unbalanced datasets, {\alg} converges to a stationary point with an $\mathcal{O}(1/\sqrt{mT})$ convergence rate for nonconvex objective functions in training, where $m$ is the number of clients and $T$ is the total number of training iterations.
This result further implies a linear speedup in the number of clients and matches that of the noise-free FedAvg algorithm.
\end{list}


\section{Related Work} \label{sec:related}

OTA-FL utilizes over-the-air computation through analog transmission in wireless MAC~\cite{abari2016over,yang2019federated,yang2020federated,zhu2019broadband,sery2020analog,amiri2020federated,amiri2020machine,amiri2019over}.
Despite the advantage of high scalability for large amount of clients, existing works on OTA-FL\cite{zhu2019broadband,sery2020analog,amiri2020federated,amiri2020machine,amiri2019over} have empirically shown that the channel noise substantially degrades the learning performance. 
Therefore, theoretically quantifying the impact of channel noise on convergence needs more in-depth investigation (See Section~\ref{sec:channelnoise}).

To mitigate the impacts of channel noise under limited transmission power constraints, one popular approach is to utilize uniform transceiver scaling for all clients.
For example, references~\cite{amiri2020machine,amiri2020federated,zhu2020one} have considered an identical sparsity pattern to reduce communication overhead. Reference~\cite{guo2020analog} has proposed a new learning rate scheme that considers the quality of the gradient estimation; 
Reference~\cite{chen2018uniform} has developed a uniform-forcing transceiver scaling for OTA function computation, while the work in \cite{zhang2020gradient} has studied the optimal power control problem by taking gradient statistics into account. Reference~\cite{sery2021over} has proposed an uniform transceiver scaling by considering data heterogeneity.
A common approach of these existing works above is to formulate the power control problem separately to satisfy the power constraints after the local computation at clients. Moreover, data and system heterogeneity and adapting the power resources for computation is not tied to transmission power control, despite the fact that each edge device is constrained in total power that is needed for both computation and communication.
Due to the coupling of computation and communication processes, a joint adaptive computation and power control for OTA-FL is necessary in order to better mitigate the combined impacts of channel noise, power constraints, as well as data and system heterogeneity, which constitutes the main goal of this paper.

\section{System Model} \label{sec: prelim}

In this section, we first introduce our OTA-FL model in Section~\ref{subsec: fl} and then the communication model in Section~\ref{subsec: channel}.

\subsection{Over-the-Air Federated Learning Model} \label{subsec: fl}
We consider a FL system with one server and $m$ clients in total.
Each client $i \in [m]$ contains a private local dataset $D_i$.
Each local dataset $D_i$ is i.i.d. sampled from a distribution $\mc{X}_i$.
In this paper, we consider non-i.i.d. datasests across users, i.e., $\mc{X}_i \neq \mc{X}_j$ if $ i \neq j, \forall i, j \in [m]$.
The goal of FL is to collaboratively train a global learning model by solving an optimization problem in the form of:
\begin{equation}
    \min_{\x \in \mathbb{R}^d}F(\x) \triangleq \min_{\x\in\mathbb{R}^d} \sum_{i \in [m]} \alpha_i F_i(\x, D_i), 
    \label{eq: objective}
\end{equation}
where $\x \in \mb{R}^d$ is the model parameter,
$\alpha_i = \frac{| D_i |}{\sum_{i \in [m]} | D_i |}$ denotes the proportion of client $i$'s dataset size in the entire population.
In this paper, we consider the unbalanced data setting: $\alpha_i \ne \alpha_j$ if $i\ne j$.
In \eqref{eq: objective}, $F_i(\x, D_i) \triangleq \frac{1}{| D_i |} \sum_{\xi^i_j \in D_i} F(\x, \xi^i_j)$ is the local objective function, where $\xi^i_j$ is the $j$-th sample in $D_i$.
In FL, $F_i(\x, D_i)$ is assumed to be non-convex in general.

In each communication round $t$, the server broadcasts the latest global model parameter $\x_t$ to each client.
Upon receiving $\x_t$, client $i$ performs local computations with $\x^i_{t, 0} = \x_t$:
\begin{equation}
    \x^i_{t, k+1} = \x^i_{t, k} - \eta \nabla F(\x^i_{t, k}, \xi^i_{t, k}), \quad k = 0,\ldots,\tau_t^i-1,
\end{equation}
where $\tau_t^i$ denotes the total number of local steps at client $i$ in round $t$ and $\xi^i_{t, k}$ is a random data sample used by client $i$ in step $k$ in round $t$.
We note that one key feature of the OTA-FL model in this paper is that {\bf we allow $\tau_t^i$ to be time-varying and device-dependent}.
While this makes the OTA-FL model more practical and flexible, it also introduces an extra dimension of challenges in algorithmic design and convergence analysis.

After finishing the local iterations, each client returns the update of model parameters back to server.
Upon the reception of returned updates from all clients, the server aggregates and updates the global model $\x_t$.
In OTA-FL, the aggregation process on the server side happens automatically over-the-air thanks to the superposition property of the wireless medium.
The specific communication model will be described in Section~\ref{subsec: channel}.


\subsection{Communication Model} \label{subsec: channel}
We consider an OTA-FL system in which the server broadcasts to all clients in a downlink channel and the clients transmit to the server through a common uplink channel synchronously. We assume an error-free downlink when broadcasting the global model. This is a reasonable assumption when the server has access to more power and bandwidth resources compared to edge devices.
As a result, each client receives an error-free global model parameter $\x_t$ for its local computation in the beginning of each round $t$, i.e., $\x^i_{t, 0} = \x_t$.
For the uplink, we consider a Gaussian MAC, where the output the channel in each communication round $t$ is:
\begin{equation} \label{eqn:mac}
    \y_t = \sum_{i \in [m]} h_t^i \z^i_t + \w_t.
\end{equation}
In \eqref{eqn:mac}, $\z^i_t \in \mathbb{R}^d$ is the input from client $i$, $h_t^i$ is the channel gain of client $i$, and $\w_t \sim \mc{N}(\0, \sigma_c^2 \mf{I}_d)$ is an additive Gaussian channel noise.


We also consider an instantaneous power constraint at each client in each communication round:
\begin{equation}
    \| \z^i_t \|^2 \leq P_t^i, \quad \forall i \in [m], \forall t,
\end{equation}
where $P_t^i$ is the maximum transmission power limit for client $i$ in communication round $t$.

\section{Impacts of Channel Noise and System-Data Heterogeneity on OTA-FL} \label{sec:channelnoise}


In Section~\ref{subsec:noise}, we first characterize the impact of the channel noise on OTA-FL when directly applying the standard FedAvg framework with SGD local updates without considering power control at each client.
Then, in Section~\ref{subsec:heterogeneity}, we provide a concrete example to further illustrate the impact of channel noise coupled with heterogeneous numbers of local updates, i.e., system heterogeneity, on OTA-FL performance. 

\subsection{Impact of Channel Noise on OTA-FL} \label{subsec:noise}
To study the impact of channel noise, we first consider a general $L$-smooth objective function (i.e., having $L$-Lipschitz continuous gradients) with a single local step, i.e., $\tau_t^i =1, \forall i \in [m], t \in [T]$.
We note that we consider the original FedAvg where model parameters $\{ \x_t^i, i \in [m] \}$ are aggregated over-the-air without any further scaling.
Consequently, the channel output could be simplified as $\x_{t+1} = \x_t - \eta \nabla F(\x_t, \xi_t) + \w_t$, where $\xi_t \triangleq \{\xi^i_{t}, \forall i \in [m] \}$ represents one collective data batch composed of random samples $\{\xi^i_{t}, \forall i\}$ from all clients.
Then, we have the following theorem to characterize the impact of the channel noise on the OTA version of the FedAvg algorithm:

\begin{restatable}[Lower Bound for Gaussian Channel] {theorem} {lb} \label{thm:lb}
    Consider an OTA-FL system for training an $L$-smooth objective function $F(\x)$ with an optimal solution $\x^*$.
    Supposed that each client uses local SGD updates that are subject to additive white Gaussian noise (AWGN), i.e., $\x_{t+1} = \x_t - \eta \nabla F(\x_t, \xi_t) + \w_t$, where $\eta < \frac{1}{L}$ and $\w_t \sim \mc{N}(\0, \sigma_c^2 \mf{I}_d)$.
    Then, the sequence $\{ \x_t \}$ satisfies the following recursive relationship:
    \begin{align*}
        &\mb{E}\left[ \| \x_{t+1} - \x^* \|^2\right]\geq \mb{E}\left[ \left( 1 - \eta L \right)^2 \| \x_t - \x^* \|^2 \right] + \eta^2 \sigma^2 + \sigma_c^2, 
    \end{align*}
    which further implies the following lower bound for the training convergence: 
    \begin{align}
        &\lim_{t \rightarrow \infty} \mb{E}\left[ \| \x_t - \x^* \|^2\right] \geq \frac{\eta^2 \sigma^2 + \sigma_c^2}{1 - \left(1 - \eta L\right)^2},
    \end{align}
    where the stochastic gradient noise is assumed to be Gaussian, i.e., $\nabla F(\x_t, \xi_t) - \nabla F(\x_t) \sim \mc{N}(\0, \sigma^2 \mf{I}_d)$.
\end{restatable}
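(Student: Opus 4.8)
The plan is to track the squared distance $\mb{E}\big[\|\x_t - \x^*\|^2\big]$ directly and isolate the two independent noise sources. First I would write the stochastic gradient as $\nabla F(\x_t, \xi_t) = \nabla F(\x_t) + \n_t$ with $\n_t \triangleq \nabla F(\x_t, \xi_t) - \nabla F(\x_t) \sim \mc{N}(\0, \sigma^2\mf{I}_d)$, so that the update reads $\x_{t+1} - \x^* = \big(\x_t - \x^* - \eta\nabla F(\x_t)\big) - \eta\n_t + \w_t$. Expanding the squared norm and taking expectations conditioned on the history up to round $t$, all three cross terms vanish: the deterministic drift $\x_t - \x^* - \eta\nabla F(\x_t)$ is measurable with respect to that history, while $\n_t$ and $\w_t$ are each zero-mean and mutually independent. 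By the tower property this leaves $\mb{E}\big[\|\x_{t+1}-\x^*\|^2\big] = \mb{E}\big[\|\x_t - \x^* - \eta\nabla F(\x_t)\|^2\big] + \eta^2\sigma^2 + \sigma_c^2$, where the two additive constants are precisely the variances contributed by the gradient noise and the channel noise.

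The crux is to lower bound the deterministic drift term by $(1-\eta L)^2\|\x_t - \x^*\|^2$. Here I would invoke optimality of $\x^*$ (so $\nabla F(\x^*) = \0$) together with $L$-smoothness to obtain $\|\nabla F(\x_t)\| = \|\nabla F(\x_t) - \nabla F(\x^*)\| \le L\|\x_t - \x^*\|$, and then apply the reverse triangle inequality: $\|\x_t - \x^* - \eta\nabla F(\x_t)\| \ge \|\x_t - \x^*\| - \eta\|\nabla F(\x_t)\| \ge (1-\eta L)\|\x_t - \x^*\|$. Since $\eta < 1/L$, the factor $1-\eta L$ is strictly positive, so squaring preserves the inequality and yields exactly the stated recursion (with the deterministic factor pulled outside as $(1-\eta L)^2\mb{E}[\|\x_t-\x^*\|^2]$). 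I expect this to be the main obstacle: the naive route of expanding the inner product and applying Cauchy--Schwarz only delivers the weaker constant $1-2\eta L$, because $L$-smoothness bounds $\|\nabla F(\x_t)\|$ from \emph{above} and cannot supply the extra $\eta^2 L^2\|\x_t - \x^*\|^2$ needed to complete the square. Bounding the norm of the drift vector rather than its inner product is what sidesteps this gap.

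Finally, I would unroll the recursion. Writing $a_t \triangleq \mb{E}\big[\|\x_t - \x^*\|^2\big]$ and $\rho \triangleq (1-\eta L)^2 \in [0,1)$, the bound $a_{t+1} \ge \rho\, a_t + \big(\eta^2\sigma^2 + \sigma_c^2\big)$ iterates through a geometric series to $a_t \ge \rho^t a_0 + \big(\eta^2\sigma^2 + \sigma_c^2\big)\frac{1-\rho^t}{1-\rho}$. Letting $t \to \infty$, the transient $\rho^t a_0$ vanishes and the second term converges to $\frac{\eta^2\sigma^2 + \sigma_c^2}{1-(1-\eta L)^2}$, which establishes the claimed asymptotic lower bound and confirms a non-vanishing error floor governed jointly by the gradient-noise and channel-noise variances.
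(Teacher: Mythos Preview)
Your proposal is correct and follows essentially the same route as the paper: decouple the three pieces by independence and zero-mean noise, then apply the reverse triangle inequality together with $L$-smoothness (and $\nabla F(\x^*)=\0$) to lower bound the drift by $(1-\eta L)\|\x_t-\x^*\|$. Your write-up is in fact more complete than the paper's appendix, which stops at the recursive inequality; you also supply the geometric unrolling that yields the stated asymptotic floor and make explicit why $\eta<1/L$ is needed for the squaring step.
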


\begin{proof}[Proof Sketch]
By assuming independent stochastic gradient noise and channel noise, we could decouple these noise terms and thus producing an iteration relation of $\| \x_t - \x_{*} \|$ by $L$-smoothness with proper learning rate $\eta < \frac{1}{L}$.
As the channel noise exists in every round, such noise variance term $\sigma_c^2$ is non-vanishing even for infinitely many rounds.
Due to space limitation, we refer readers to Appendix~\ref{FullProof} for the complete proof.
\end{proof}

Theorem~\ref{thm:lb} suggests that there is a non-vanishing convergence error due to the Gaussian MAC noise when only standard SGD updates are used locally at each client.
This motivates us to develop joint adaptive computation and power control for OTA-FL to mitigate the MAC noise effect.

\subsection{Impacts of System-Data Heterogeneity on OTA-FL} \label{subsec:heterogeneity}
As discussed in above Section~\ref{subsec: fl}, the FL optimization problem considered in this paper contains non-convex objective function, heterogeneous (non-i.i.d.) data, and different number of local updates $\tau_t^i$ at each client.
As shown in previous work \cite{wang2020fednova}, different number of local steps (or optimization processes) among clients introduce objective inconsistency, rendering potentially arbitrary deviation from optimal solutions in conventional FL.
Next, we show that similar negative impacts of data heterogeneity and different number of local steps also affect the OTA-FL performance even under proper power control.
This insight further motivates the need for a {\em joint} adaptive computation and power control.

Here, we use the GD method for each client's local update and omit the channel noise for now to clearly characterize the above factors in FL. 
That is, each client $i \in [m]$ takes local steps as follows:
\begin{align}
    \x^i_{t, k+1} = \x^i_{t, k} - \eta \nabla F_i(\x^i_{t, k}), \quad \forall k \in [\tau_i].
\end{align}
Then, a power control factor at client $i$ is applied as follows: 
$\z^i_t = \beta_i \left(\x^i_{t, \tau_i} - \x^i_{t, 0}\right)$.
The aggregation with power control factor $\beta$ on the server side can be written as
\begin{align}
    \x_{t+1} - \x_{t} &= \sum_{i=1}^{m} \frac{\beta_i}{\beta} \left(\x^i_{t, \tau_i} - \x^i_{t, 0}\right).
\end{align}
We now consider the following OTA-FL example:
\begin{ex}[Deviation of Objective Value under Disjoint Power Control and System-Data Heterogeneity]
\label{ex:objective_Deviation}
    {\em Consider quadratic objective functions:
    \begin{align*}
        F_i(\x) &= \frac{1}{2} \x_t \MH_i \x - \e_i^T \x + \frac{1}{2} \e_i^T \MH_i^{-1} \e_i, \forall i \in [m], \text{ and }  \\
        F(\x) &= \sum_{i=1}^{m} \alpha_i F_i(\x) = \frac{1}{2} \x_t \bar{\MH} \x - \bar{\e_i}^T \x + \frac{1}{2} \sum_{i=1}^{m} \alpha_i \e_i^T \MH_i^{-1} \e_i,
    \end{align*}
    where $\MH_i \in \mb{R}^{d \times d}$ is invertible, $\bar{\MH} \triangleq \sum_{i=1}^{m} \alpha_i \MH_i$, $\e_i \in \mb{R}^{d}$ is an arbitrary vector, and $\bar{\e} \triangleq \sum_{i=1}^{m} \alpha_i \e_i$.
    Let each client take $\tau_i$ GD steps.
    Then, the generated sequence $\{\x_t\}$ satisfies:
    \begin{align}
        \lim_{t \rightarrow \infty} \x_t &= \hat{\x},
    \end{align}
    where the limit point $\hat{\x}$ can be computed as:
    \begin{eqnarray}
        \hat{\x} &= \left[\sum_{i=1}^{m} \frac{\beta_i}{\beta} \left[\mf{I} - \left(\mf{I} - \eta \MH_i \right)^{\tau_i} \right] \MH_i^{-1} \MH_i\right]^{-1} \times \nonumber\\
        & \left[\sum_{i=1}^{m} \frac{\beta_i}{\beta} \left[\mf{I} - \left(\mf{I} - \eta \MH_i \right)^{\tau_i} \right] \MH_i^{-1} \e_i\right].
    \end{eqnarray}
For the quadratic objective function $F(\x)$, the closed-form solution is $\x^* = \bar{\MH}^{-1} \bar{\e}$.
Comparing $\x^*$ and $\hat{\x}$, we can see a deviation that depends on problem hyper-parameter $\MH$, learning rate $\eta$, local step numbers $\tau_i$, factor $\beta_i$ and $\beta$.}\qed
\end{ex}

Due to space limitation, we provide the proof details of this example in Appendix~\ref{FullProof}.
It can be seen from this example that the complex coupling between power control and system-data heterogeneity renders a highly non-trivial OTA-FL power control and algorithmic design to guarantee convergence to an optimal solution.
This further motivates our OTA-FL algorithm design with {\em joint} adaptive computation and power control in Section~\ref{sec: alg}.


\section{Algorithm Design} \label{sec: alg}

\begin{algorithm}[t!]
    \caption{Adaptive Over-the-Air Federated Learning.} \label{alg:adap_fl} 
    \begin{algorithmic}[1]
    \STATE 
    \emph{\bf Init: global mode $\x_0$.}
    \FOR{$t=0, \dots, T-1$}
    \STATE {Server broadcasts latest global model $\x_t$ to each device.}
    \FOR{each client $i \in [m]$}
        \STATE {Each client locally and adaptively trains model via SGD~\eqref{sgd} under power constraints and transmits $\delta^i_t$ by transmission scaling as in~\eqref{precoding}.} 
    \ENDFOR
    \STATE {The server aggregates and updates global model by receiver rescaling~\eqref{decoding}.}
    \ENDFOR
    \end{algorithmic}
    \end{algorithm}

To address the negative impacts of channel noise and system-data heterogeneity in Section~\ref{sec:channelnoise}, we propose an over-the-air federated learning algorithm with joint adaptive computation and power control (ACPC-OTA-FL) as shown in Algorithm~\ref{alg:adap_fl}.
The basic idea of our {\alg} algorithm is to utilize a time-varying dynamic number of local SGD steps at each client under the instantaneous power constraint at this particular client.
Specifically, given a server-side power control scaling factor $\beta_t$, 
each client $i$ chooses to perform $\tau_t^i$ local SGD steps as follows:
\begin{align}
    \x^i_{t, k+1} = \x^i_{t, k} - \eta \nabla F(\x^i_{t, k}, \xi^i_{t, k}), \quad k=1,\ldots,\tau_t^i, \label{sgd}
\end{align}
under the time-varying and client-dependent power constraints $P_t^i$ such that 
$\| \delta^i_t \|^2 \leq P_t^i$,
where $\delta^i_t = \beta_t^i (\x^i_{t, \tau_t^i} - \x^i_{t, 0})$ and \begin{align} \label{precoding}
    \beta_t^i = \frac{\beta_t \alpha_i}{\tau^i_t}.
\end{align}
Given power constraints $P$ and $\beta$, we can choose $\tau^i_t$ based on {\em trail-and-error} or {\em greedy approach} whenever the power constraints are satisfied.
For simplicity, we ignore fading for now.
The uplink channel output is $\sum_{i=1}^{m} \delta^i_t + \w_t$.
With power control rescaling on the server side, the global model is aggregated and updated as: 
\begin{align}
    \x_{t+1} &= \x_{t} + \sum_{i=1}^{m} \frac{\delta^i_t}{\beta_t} + \tilde{\w}_t. \label{decoding}
\end{align}
As mentioned earlier, we assume a Gaussian channel noise $\w_t \sim \mc{N}(\0, \sigma_c^2 \mf{I}_d)$ and thus $\tilde{\w}_t \sim \mc{N}(\0, \frac{\sigma_c^2}{\beta_t^2} \mf{I}_d)$.
The novelty of our algorithm lies in utilizing a time-varying dynamic local steps to fully exploit the computation and communication power resources.

There are two advantages in our algorithm compared to previous works.
First, we jointly consider the computation-communication co-design due to their complex coupling relationship as shown in Section~\ref{subsec:heterogeneity}.
As a result, more powerful clients with more computation capacities and transmission power will execute more local update steps and have a large fraction in the server-side aggregation.
This adaptive and client-dependent design is different from previous works ~\cite{chen2018uniform,amiri2020machine,amiri2020federated,zhu2020one,guo2020analog,zhang2020gradient,sery2021over}, which considered the communication problem separately after finishing local update computation and used an uniform power control scaling factor without considering the heterogeneity among the clients.
Second, our ACPC-OTA-FL algorithm alleviates the straggler (i.e., slow client) problem by allowing different local step numbers across clients in each communication round. 
%
%
Before providing the theoretical convergence result, we first state our assumptions:
\begin{assum}($L$-Lipschitz Continuous Gradient) \label{a_smooth}
	There exists a constant $L > 0$, such that $ \| \nabla F_i(\x) - \nabla F_i(\y) \| \leq L \| \x - \y \|$, $\forall \x, \y \in \mathbb{R}^d$, and $i \in [m]$.
\end{assum}

\begin{assum}(Unbiased Local Stochastic Gradients and Their Bounded Variance) \label{a_unbias}
	Let $\xi_i$ be a random local data sample at client $i$.
	The local stochastic gradient is unbiased and has a bounded variance, i.e.,
	$\mathbb{E} [\nabla F_i(\x, \xi_i)] = \nabla F_i(\x)$, $\forall i \in [m]$, and $\mathbb{E} [\| \nabla F_i(\x, \xi_i) -  \nabla F_i(\x) \|^2] \leq \sigma^2$, where the expectation is taken over the local data distribution $\mc{X}_i$.
\end{assum}

\begin{assum}(Bounded Stochastic Gradient) \label{a_bounded}
	There exist a constant $G \geq 0$, such that the norm of each local stochastic gradient is bounded:
	$\mathbb{E} [\| \nabla F_i(\x, \xi_i) \|^2] \leq G^2$, $\forall i \in [m]$.
\end{assum}

Assumptions~\ref{a_smooth}--\ref{a_unbias} are common in the analysis of SGD-based algorithms.
Assumption~\ref{a_bounded} is also widely used in OTA-FL with non-i.i.d. datasets (e.g., \cite{amiri2020machine,Li2020convergence,sery2021over}).
With these three assumptions, we have the following convergence result:

\begin{restatable}[Convergence Rate] {theorem} {convergence} \label{thm:convergence}
    Let $\{ \x_t \}$ be the global model generated by Algorithm~\ref{alg:adap_fl}.
    Under Assumptions~\ref{a_smooth}-~\ref{a_bounded} and a constant learning rate $\eta_t = \eta, \forall t \in [T]$, it holds that:
    \begin{multline}
        \min_{t \in [T]} \mb{E} \| \nabla F(\x_t) \|^2 \leq \underbrace{\frac{2 \left(F(\x_0) - F(\x_{*}) \right)}{T \eta}}_{\mathrm{optimization \, error}} + \underbrace{ L \eta \sigma^2 \sum_{i=1}^{m} \alpha_i^2}_{\mathrm{statistical \, error}} \nonumber \\
        + \underbrace{m L^2 \eta^2 G^2 \sum_{i=1}^m (\alpha_i)^2 \left(\tau_i\right)^2}_{\mathrm{local \, update \, error}} + \underbrace{\frac{L \sigma_c^2}{ \eta \beta^2}}_{\substack{\mathrm{channel \,
        noise} \\ \mathrm{error}}},
    \end{multline}
    where $\left(\tau_i\right)^2 = \frac{\sum_{t=0}^{T-1} \left( \tau^i_t \right)^2}{T}$ and $\frac{1}{\bar{\beta}^2} = \frac{1}{T} \sum_{t=0}^{T-1} \frac{1}{\beta_t^2}$.
\end{restatable}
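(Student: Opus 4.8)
The plan is to run the standard descent-lemma argument on the potential $F(\x_t)$, viewing the rescaled server update as a single biased gradient step perturbed by two independent noise sources. First I would unfold \eqref{decoding}: since $\delta_t^i=\beta_t^i(\x_{t,\tau_t^i}^i-\x_{t,0}^i)$ with $\beta_t^i=\beta_t\alpha_i/\tau_t^i$, the update collapses to
\[
\x_{t+1}-\x_t=-\eta\sum_{i=1}^m\alpha_i\frac{1}{\tau_t^i}\sum_{k=0}^{\tau_t^i-1}\nabla F(\x_{t,k}^i,\xi_{t,k}^i)+\tilde{\w}_t\triangleq-\eta\,d_t+\tilde{\w}_t,
\]
so $d_t$ is an $\alpha$-weighted average of per-client \emph{normalized} gradient sums (the weights $\alpha_i$ summing to one) and $\tilde{\w}_t\sim\mc{N}(\0,(\sigma_c^2/\beta_t^2)\I_d)$ is independent of $d_t$. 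Applying $L$-smoothness (Assumption~\ref{a_smooth}) gives $F(\x_{t+1})\le F(\x_t)+\langle\nabla F(\x_t),\x_{t+1}-\x_t\rangle+\tfrac{L}{2}\|\x_{t+1}-\x_t\|^2$; taking expectation over $\tilde{\w}_t$ removes its cross term and leaves the channel-noise contribution $\tfrac{L}{2}\sigma_c^2/\beta_t^2$.

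Next I would take expectation over the stochastic gradients. Let $\bar{d}_t\triangleq\sum_i\alpha_i\tfrac{1}{\tau_t^i}\sum_k\nabla F_i(\x_{t,k}^i)$ be the same direction with each stochastic gradient replaced by its true local gradient. Because the sampling errors are martingale differences along the local trajectories, they are orthogonal across clients and steps, giving the clean split $\mb{E}\langle\nabla F(\x_t),d_t\rangle=\mb{E}\langle\nabla F(\x_t),\bar{d}_t\rangle$ and $\mb{E}\|d_t\|^2=\mb{E}\|\bar{d}_t\|^2+\mb{E}\|d_t-\bar{d}_t\|^2$, where Assumption~\ref{a_unbias} bounds $\mb{E}\|d_t-\bar{d}_t\|^2\le\sigma^2\sum_i\alpha_i^2/\tau_t^i\le\sigma^2\sum_i\alpha_i^2$, the source of the statistical-error term. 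Using the identity $-\eta\langle\nabla F(\x_t),\bar{d}_t\rangle=-\tfrac{\eta}{2}\|\nabla F(\x_t)\|^2-\tfrac{\eta}{2}\|\bar{d}_t\|^2+\tfrac{\eta}{2}\|\nabla F(\x_t)-\bar{d}_t\|^2$ isolates the descent term $-\tfrac{\eta}{2}\|\nabla F(\x_t)\|^2$ plus a \emph{client-drift} term; choosing $\eta\le 1/L$ makes the combined $\|\bar{d}_t\|^2$ coefficient $-\tfrac{\eta}{2}+\tfrac{L\eta^2}{2}$ nonpositive, so those terms are discarded.

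For the client-drift term I would apply Cauchy--Schwarz across the $m$ clients, $\|\sum_i\alpha_i v_i\|^2\le m\sum_i\alpha_i^2\|v_i\|^2$ with $v_i=\nabla F_i(\x_t)-\tfrac{1}{\tau_t^i}\sum_k\nabla F_i(\x_{t,k}^i)$---this is precisely what produces the factor $m$ and the weights $\alpha_i^2$ in the stated bound---then Jensen over the inner average and $L$-smoothness reduce it to the per-step drift $\mb{E}\|\x_{t,k}^i-\x_t\|^2$. Since $\x_{t,k}^i-\x_t=-\eta\sum_{j=0}^{k-1}\nabla F(\x_{t,j}^i,\xi_{t,j}^i)$, Cauchy--Schwarz on the sum together with the bounded-gradient Assumption~\ref{a_bounded} gives $\mb{E}\|\x_{t,k}^i-\x_t\|^2\le\eta^2k^2G^2$, and $\tfrac{1}{\tau_t^i}\sum_{k}k^2\le(\tau_t^i)^2$ yields the local-update error proportional to $mL^2\eta^2G^2\sum_i\alpha_i^2(\tau_t^i)^2$.

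Finally I would sum the resulting one-step inequality over $t=0,\dots,T-1$, telescope $F(\x_t)$ using $F(\x_T)\ge F(\x_*)$, divide by $T\eta/2$, and replace the time averages $\tfrac{1}{T}\sum_t(\tau_t^i)^2$ and $\tfrac{1}{T}\sum_t\beta_t^{-2}$ by their definitions $(\tau_i)^2$ and $\bar{\beta}^{-2}$; bounding $\min_t\mb{E}\|\nabla F(\x_t)\|^2$ by the average then produces the claimed four-term bound (the halves cancelling against the $T\eta/2$ normalization to give the stated coefficients). The main obstacle is the bookkeeping around the time-varying, client-dependent normalization $1/\tau_t^i$: it must be threaded consistently through the polarization step, the martingale split of the second moment, and the drift bound---including a careful filtration argument so the sampling errors decorrelate across the heterogeneous local trajectories---so that the heterogeneous step counts collapse cleanly into the factors $(\tau_i)^2$ and $\sum_i\alpha_i^2$ rather than leaving residual cross terms.
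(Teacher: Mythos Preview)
Your proposal is correct and follows essentially the same route as the paper's proof: unfold the update into a normalized $\alpha$-weighted gradient direction plus scaled channel noise, apply $L$-smoothness, use the polarization identity to isolate $-\tfrac{\eta}{2}\|\nabla F(\x_t)\|^2$, drop the $\|\bar d_t\|^2$ term via $\eta\le 1/L$, separate the stochastic-gradient variance across clients to get $\sigma^2\sum_i\alpha_i^2$, bound the drift by Cauchy--Schwarz (yielding the factor $m$) followed by Jensen, $L$-smoothness, and Assumption~\ref{a_bounded} to obtain $\eta^2 k^2 G^2$ and hence $(\tau_t^i)^2$, then telescope and average. Your explicit flagging of the martingale/filtration bookkeeping for the variance split is, if anything, more careful than the paper, which invokes the same decomposition without spelling out the conditioning.
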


\begin{proof}[Proof Sketch]
In each round, the average global model update is the same as noise-free FedAvg since the channel noise is independent Gaussian with zero mean.
Thus, we can decouple the channel noise term as an extra error scaled by $\frac{\sigma_c^2}{\beta_t^2}$ when calculating the function descent ($F(\x_{t+1}) - F(\x_t)$) in each round by the $L$-smoothness.
Then, the technical challenge lies in heterogeneous local steps across clients.
By simulating mini-batch SGD method, we could further bound the difference $\frac{1}{2} \eta_t \mb{E}_t [\| \sum_{i=1}^{m} \frac{\alpha_i}{\tau^i_t} \sum_{k=0}^{\tau^i_t-1} (\nabla F_i(\x_t) - \nabla F_i(\x^i_{t, k})) \|^2] \leq \frac{1}{2} \eta_t^3 m L^2 \sum_{i=1}^m (\alpha_i)^2\left(\tau^i_t\right)^2 G^2$, which accounts for the size of dataset, data heterogeneity and different local steps.
The above two terms correspond to channel noise error and local update error, respectively.
Following the classic analysis for SGD-based methods, the optimization error and statistical error could be similarly derived, and the final convergence result naturally follows.
Due to space limitation, we relegate the full proof to Appendix~\ref{FullProof}.
\end{proof}

Theorem~\ref{thm:convergence} characterizes four sources of errors that affect the convergence rate: 
1) the optimization error dependent on the distance between the initial guess and optimal objective value; 
2) the statistical error due to the use of stochastic gradients rather than full gradients; 
3) local update error from local update steps coupled with data heterogeneity;  and 
4) channel noise error from over-the-air transmissions. 
Among these four errors, only the optimization error (first term) vanishes as the total number of iterations $T$ gets large, while other three terms are independent of $T$.

Similar to classic SGD or FedAvg convergence analysis, diminishing learning rates $\mc{O}(\frac{1}{\sqrt{T}})$ can be used to remove the statistical and local update errors and obtain a convergence error bound $\min_{t \in [T]} \mb{E} \| \nabla F(\x_t) \|^2 = \mc{O}(\frac{1}{\sqrt{T}})$.
To mitigate the channel noise error, the parameter $\beta$ needs to be chosen judiciously.
Given $\delta_t^i$ in communication round $t$, we can set $\beta_t^2 = \min_{i \in [m]} \{\frac{P_t^i (\tau_t^i)^2}{\| \delta_t^i \|^2 \alpha_i^2}\}$.
If the $\delta_t^i$-information is unavailable, we can choose $\| \delta_t^i \|^2 \leq \eta_t^2 (\tau_t^i)^2 G^2$ by its upper bound, and thus $\beta_t^2 = \frac{P_t}{\alpha_i^2 \eta_t^2 G^2}$, where $P_t = \min_{i \in [m]} P_t^i$.
For the special case with $P = P_t^i, \forall i, t$, $\alpha_i = \frac{1}{m}$ (balanced datasets), and identical local steps $\tau_t^i = \tau, \forall i, t$, the channel noise error (the fourth term) becomes $\frac{\eta \sigma_c^2 G^2}{P m^2}$, 
and the following result immediately follows from Theorem~\ref{thm:convergence}:

\begin{restatable}[Convergence Rate] {corollary} {convergence_rate} \label{cor:convergence}
Let $\alpha_i = \frac{1}{m}, \tau_i = \tau, \eta = \frac{\sqrt{m}}{\sqrt{T}}, \beta^2 = \frac{m}{\eta^2}$, the convergence rate of {\alg} under the special case above is $\mc{O}(\frac{\sigma^2 + 1}{\sqrt{mT}}) + \mc{O}(\frac{m \tau^2 G^2}{T}) + \mc{O}(\frac{ \sigma_c^2}{\sqrt{mT}})$.
\end{restatable}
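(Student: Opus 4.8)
The plan is to treat Corollary~\ref{cor:convergence} as a direct specialization of Theorem~\ref{thm:convergence}: I would simply substitute the prescribed parameter choices $\alpha_i = \frac{1}{m}$, $\tau_t^i = \tau$, $\eta = \sqrt{m}/\sqrt{T}$, and $\beta_t^2 = m/\eta^2$ into the four-term bound and simplify each term to read off its order in $m$ and $T$. No new analytical machinery is needed; the content is purely bookkeeping, so the key is to carry the algebra carefully and verify that the optimization and channel-noise terms both collapse to the advertised $\mc{O}(1/\sqrt{mT})$ scale. I note that under $\tau_t^i = \tau$ and $\beta_t = \beta$ the averaged quantities defined in the theorem reduce to $(\tau_i)^2 = \tau^2$ and $\bar\beta^2 = \beta^2$, so the statement is applicable verbatim.

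First I would evaluate the data-dependent sums in the balanced setting: with $\alpha_i = \frac{1}{m}$ we get $\sum_{i=1}^m \alpha_i^2 = \frac{1}{m}$ and $\sum_{i=1}^m \alpha_i^2 \tau_i^2 = \frac{\tau^2}{m}$. Next I would handle the two learning-rate-driven factors that appear repeatedly, namely $T\eta$ and $\eta^2$: since $\eta = \sqrt{m}/\sqrt{T}$ we have $T\eta = \sqrt{mT}$ and $\eta^2 = m/T$. With these in hand the first (optimization) term becomes $\frac{2(F(\x_0)-F(\x_*))}{\sqrt{mT}} = \mc{O}(1/\sqrt{mT})$, and the second (statistical) term becomes $L\eta\sigma^2 \cdot \frac{1}{m} = \frac{L\sigma^2}{\sqrt{mT}} = \mc{O}(\sigma^2/\sqrt{mT})$. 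The third (local-update) term becomes $mL^2\eta^2 G^2 \cdot \frac{\tau^2}{m} = L^2 G^2 \tau^2 \cdot \frac{m}{T} = \mc{O}(m\tau^2 G^2/T)$.

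The only term that requires the specific choice of $\beta$ is the fourth (channel-noise) term $\frac{L\sigma_c^2}{\eta\beta^2}$. Here I would substitute $\beta^2 = m/\eta^2$ so that the $\eta$ factors combine as $\frac{L\sigma_c^2}{\eta \cdot m/\eta^2} = \frac{L\sigma_c^2\,\eta}{m} = \frac{L\sigma_c^2}{\sqrt{mT}} = \mc{O}(\sigma_c^2/\sqrt{mT})$; this $\eta$--$\beta$ cancellation is the one spot where an exponent slip would silently break the claimed rate, so it is the step I would double-check most carefully (it is also what makes the channel-noise penalty vanish at the same speed as the other dominant terms rather than remaining the non-vanishing floor exhibited in Theorem~\ref{thm:lb}). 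Finally I would merge the optimization and statistical contributions, both of order $1/\sqrt{mT}$ with constants $2(F(\x_0)-F(\x_*))$ and $L\sigma^2$, into the single term $\mc{O}\big((\sigma^2+1)/\sqrt{mT}\big)$, while keeping the local-update and channel-noise terms separate. This reproduces exactly the stated three-term expression and exhibits the linear-speedup $\mc{O}(1/\sqrt{mT})$ dependence in the number of clients $m$.
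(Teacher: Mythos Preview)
Your proposal is correct and matches the paper's approach: the corollary is stated as an immediate consequence of Theorem~\ref{thm:convergence}, obtained precisely by substituting $\alpha_i=\tfrac{1}{m}$, $\tau_i=\tau$, $\eta=\sqrt{m/T}$, and $\beta^2=m/\eta^2$ into the four-term bound and simplifying. Your term-by-term bookkeeping (including the $\eta$--$\beta$ cancellation in the channel-noise term) is exactly what is needed, and the paper offers no separate argument beyond this.
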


Corollary~\ref{cor:convergence} implies that, if $\tau \leq \frac{T^{1/4}}{m^{3/4}}$, a linear speedup in terms of the number of clients (i.e., $\mc{O}(\frac{1}{\sqrt{mT}})$) can be achieved, which shows the benefits of parallelism and matches the convergence rate of FedAvg in noise-free communication environment~\cite{yu2019parallel,yang2021linearspeedup}.


Lastly, we note that it is straightforward to extend our results to fading channels with known CSI.
Specifically, the adaptive computation and power control strategy for fading channels is to choose local steps $\tau^i_t$ such that $\| \delta^i_t \|^2 \leq P_t^i$, where $\delta^i_t = \beta_t^i \left(\x^i_{t, \tau_i} - \x^i_{t, 0}\right),
\beta_t^i = \frac{\beta_t \alpha_i}{\tau^i_t h_t^i}$, and $P_t^i > 0$ represents the maximum transmission power for client $i$ in round $t$.
Under this joint computation and power control, the received signal remains the same as that in the non-fading OTA-FL setting.
Thus, the same convergence results in Theorem~\ref{thm:convergence} and Corollary~\ref{cor:convergence} continue to hold.


\section{Numerical Results} \label{sec: exp}

In this section, we conduct numerical experiments to verify our theoretical results using logistic regression on the MNIST dataset~\cite{lecun1998gradient}. Following the same procedure as in existing works~\cite{Li2020convergence, mcmahan2017communication, yang2021linearspeedup}, we distribute the data evenly to $m=10$ clients in a label-based partition to impose data heterogeneity across the clients, where the heterogeneity level can be characterized by a parameter $p$. 
As the MNIST dataset contains $10$ classes of labels in total, $p=10$ represents the i.i.d. case.
The smaller the $p$-value, the more heterogeneous the data across clients. 
We simulate Gaussian MAC with signal-to-noise ratios (SNRs) of \unit[$-1$]{dB}, \unit[10]{dB} and \unit[20]{dB}.

\begin{table}[tb]
\caption{Logistic regression test Accuracy (\%) for ACPC-OTA-FL compared with COTAF and FedAvg on the MNIST dataset.}
\label{table:test_accuracy}
\centering
\begin{tabular}{|c|c|ccc|}
\hline
\multicolumn{1}{|c|}{\multirow{2}{*}{\textbf{Non-IID Level}}} & \multicolumn{1}{c|}{\multirow{2}{*}{\textbf{Algorithm}}} & \multicolumn{3}{c|}{\textbf{Signal-to-Noise Ratio}}                                        \\ \cline{3-5} 
\multicolumn{1}{|c|}{}                               & \multicolumn{1}{c|}{}                           & \multicolumn{1}{c|}{\unit[-1]{dB}}  & \multicolumn{1}{c|}{\unit[10]{dB}}  & \unit[20]{dB}  \\ \hline
\multirow{3}{*}{$p=1$}                                 & {\cellcolor[gray]{.9}{\bf ACPC-OTA-FL}}                                     & \multicolumn{1}{c|}{\cellcolor[gray]{.9} {\bf 78.22}} & \multicolumn{1}{c|}{\cellcolor[gray]{.9} {\bf 89.08}} & \cellcolor[gray]{.9} {\bf 89.54} \\ \cline{2-5} 
                                                     & COTAF                                           & \multicolumn{1}{c|}{46.55} & \multicolumn{1}{c|}{65.54} & 85.92 \\ \cline{2-5} 
                                                     & FedAvg                                          & \multicolumn{1}{c|}{67.49} & \multicolumn{1}{c|}{68.08} & 67.65 \\ \hline
\multirow{3}{*}{$p=2$}                                 & {\cellcolor[gray]{.9}{\bf ACPC-OTA-FL}}                                     & \multicolumn{1}{c|}{\cellcolor[gray]{.9} {\bf 81.89}} & \multicolumn{1}{c|}{\cellcolor[gray]{.9} {\bf 89.58}} & \cellcolor[gray]{.9} {\bf 90.43} \\ \cline{2-5} 
                                                     & COTAF                                           & \multicolumn{1}{c|}{63.59} & \multicolumn{1}{c|}{78.80} & 86.57 \\ \cline{2-5} 
                                                     & FedAvg                                          & \multicolumn{1}{c|}{71.55} & \multicolumn{1}{c|}{78.03} & 79.86 \\ \hline
\multirow{3}{*}{$p=5$}                                 & {\cellcolor[gray]{.9}{\bf ACPC-OTA-FL}}                                     & \multicolumn{1}{c|}{\cellcolor[gray]{.9} {\bf 86.48}} & \multicolumn{1}{c|}{\cellcolor[gray]{.9} {\bf 90.64}} & \cellcolor[gray]{.9}{\bf 91.20} \\ \cline{2-5} 
                                                     & COTAF                                           & \multicolumn{1}{c|}{79.64} & \multicolumn{1}{c|}{86.52} & 90.82 \\ \cline{2-5} 
                                                     & FedAvg                                          & \multicolumn{1}{c|}{74.76} & \multicolumn{1}{c|}{82.84} & 85.40 \\ \hline
\multirow{3}{*}{$p=10$}                                & {\cellcolor[gray]{.9}{\bf ACPC-OTA-FL}}                                     & \multicolumn{1}{c|}{\cellcolor[gray]{.9}{\bf86.21}} & \multicolumn{1}{c|}{\cellcolor[gray]{.9}{\bf 90.75}} & \cellcolor[gray]{.9} {\bf 91.08} \\ \cline{2-5} 
                                                     & COTAF                                           & \multicolumn{1}{c|}{86.43} & \multicolumn{1}{c|}{91.08} & 92.63 \\ \cline{2-5} 
                                                     & FedAvg                                          & \multicolumn{1}{c|}{76.26} & \multicolumn{1}{c|}{84.94} & 88.11 \\ \hline
\end{tabular}
\vspace{-.1in}
\end{table}

We illustrate the test accuracy of {\alg} compared with COTAF~\cite{sery2021over} and FedAvg~\cite{mcmahan2017communication} in Table~\ref{table:test_accuracy}. 
Two key observations are in order: 
1) Test accuracy drops significantly by directly applying FedAvg algorithm to wireless OTA-FL (up to $20\%$ accuracy drop) under large channel noise, which validates our Theorem~\ref{thm:lb} and is consistent with existing works\cite{zhu2019broadband,sery2020analog,amiri2020federated};
and 2) Under power control, both {\alg} and COTAF could alleviate the channel noise impacts in the i.i.d. data setting ($p=10$).
But in the highly heterogeneous data ($p=1, 2$) and/or low SNR settings, our {\alg} algorithm outperforms COTAF by a large margin.
For example, when $\text{SNR}=\unit[-1]{dB}$ and $p=1$, {\alg} improves the test accuracy by 31.76\% and 10.73\% compared to COTAF and FedAvg, respectively.
The intuition is that the gradient returned from the clients vary dramatically in highly heterogeneous data settings, and thus utilizing an adaptive local steps under limited power constraints allows each client to fully exploit both computation and communication resources.


\section{Conclusion} \label{sec: conclusion}
In this paper, we considered the joint adaptive local computation (number of local steps) and power control for OTA-FL. 
We first characterized the training error due to channel noise for conventional OTA-FL by establishing a fundamental lower bound for general objective functions with Lipschitz-continuous gradients. 
This motivated us to propose an over-the-air federated learning algorithm with joint adaptive computation and power control ({\alg}) to mitigate the impacts of channel noise on the learning performance, while taking the device heterogeneity into consideration. 
We analyzed the convergence of {\alg} with non-convex objective functions and heterogeneous data, and shown that the convergence rate of {\alg} matches that of FedAvg with noise-free communications.






\onecolumn
\allowdisplaybreaks

\section{Proof}\label{FullProof}
\lb*

\begin{proof}
    \begin{align}
        \mb{E}\left[ \| \x_{t+1} - \x^* \|^2\right] &= \mb{E}\left[ \| \x_t - \eta \nabla F(\x_t, \xi_t) + \w_t - \x^* \|^2\right] \\
        &= \mb{E}\left[ \| \x_t - \x^* - \eta \nabla F(\x_t) \|^2 \right] + \mb{E}\left[ \| \eta \nabla F(\x_t, \xi_t) - \eta \nabla F(\x_t) \|^2 \right] + \mb{E}\left[ \| \w_t  \|^2\right] \\
        &= \mb{E}\left[ \| \x_t - \x^* - \eta \left( \nabla F(\x_t) - \nabla F(\x^*) \right)  \|^2 \right] + \eta^2 \sigma^2 + \sigma_c^2 \\
        &\geq \mb{E}\left[ \left( \| \x_t - \x^* \| - \eta \| \nabla F(\x_t) - \nabla F(\x^*) \| \right)^2 \right] + \eta^2 \sigma^2 + \sigma_c^2 \\
        &\geq \mb{E}\left[ \left( 1 - \eta L \right)^2 \| \x_t - \x^* \|^2 \right] + \eta^2 \sigma^2 + \sigma_c^2
    \end{align}
\end{proof}

\textbf{Example 1}
{\em Consider quadratic objective functions:
    \begin{align*}
        F_i(\x) &= \frac{1}{2} \x_t \MH_i \x - \e_i^T \x + \frac{1}{2} \e_i^T \MH_i^{-1} \e_i, \forall i \in [m], \text{ and }  \\
        F(\x) &= \sum_{i=1}^{m} \alpha_i F_i(\x) = \frac{1}{2} \x_t \bar{\MH} \x - \bar{\e_i}^T \x + \frac{1}{2} \sum_{i=1}^{m} \alpha_i \e_i^T \MH_i^{-1} \e_i,
    \end{align*}
    where $\MH_i \in \mb{R}^{d \times d}$ is invertible, $\bar{\MH} \triangleq \sum_{i=1}^{m} \alpha_i \MH_i$, $\e_i \in \mb{R}^{d}$ is an arbitrary vector, and $\bar{\e} \triangleq \sum_{i=1}^{m} \alpha_i \e_i$.
    Let each client take $\tau_i$ GD steps.
    Then, the generated sequence $\{\x_t\}$ satisfies:
    \begin{align}
        \lim_{t \rightarrow \infty} \x_t &= \hat{\x},
    \end{align}
    where the limit point $\hat{\x}$ can be computed as:
    \begin{eqnarray}
        \hat{\x} &= \left[\sum_{i=1}^{m} \frac{\beta_i}{\beta} \left[\mf{I} - \left(\mf{I} - \eta \MH_i \right)^{\tau_i} \right] \MH_i^{-1} \MH_i\right]^{-1} \times \nonumber\\
        & \left[\sum_{i=1}^{m} \frac{\beta_i}{\beta} \left[\mf{I} - \left(\mf{I} - \eta \MH_i \right)^{\tau_i} \right] \MH_i^{-1} \e_i\right].
    \end{eqnarray}
For the quadratic objective function $F(\x)$, the closed-form solution is $\x^* = \bar{\MH}^{-1} \bar{\e}$.
Comparing $\x^*$ and $\hat{\x}$, we can see a deviation that depends on problem hyper-parameter $\MH$, learning rate $\eta$, local step numbers $\tau_i$, factor $\beta_i$ and $\beta$.}\qed

\begin{proof}
Consider quadratic objective functions:
\begin{align}
    F_i(\x) &= \frac{1}{2} \x_t \MH_i \x - \e_i^T \x + \frac{1}{2} \e_i^T \MH_i^{-1} \e_i, \\
    F(\x) &= \sum_{i=1}^{m} \alpha_i F_i(\x) = \frac{1}{2} \x_t \bar{\MH} \x - \bar{\e_i}^T \x + \frac{1}{2} \sum_{i=1}^{m} \alpha_i \e_i^t \MH_i^{-1} \e_i,
\end{align}
where $\MH_i \in \mb{R}^{d \times d}$ is invertible matrix, $\bar{\MH} = \sum_{i=1}^{m} \alpha_i \MH_i$, $\e_i \in \mb{R}^{d}$ is arbitrary vector, and $\bar{\e} = \sum_{i=1}^{m} \alpha_i \e_i$.
It is easy to show the optimum for each local and global objective function are $\x_i^* = \MH_i^{-1} \e_i, \forall i \in [m]$ and $\x^* = \bar{\MH}^{-1} \bar{\e}$.

The local update for each client $i \in [m]$ is as follows:
\begin{align}
    \x^i_{t, k+1} &= \x^i_{t, k} - \eta \left[\MH_i \x^i_{t, k} - \e_i \right] \\
    &= \left(\mf{I} - \eta \MH_i \right) \x^i_{t, k} + \eta \e_i \label{ine:onestep}
\end{align}

Then we have the recursive equation for one local step by rearranging ~\ref{ine:onestep}: 
\begin{align}
    \x^i_{t, k+1} - \c^i_t &= \left(\mf{I} - \eta \MH_i \right) \left( \x^i_{t, k} - \c^i_t \right),
\end{align}
where $\c^i_t = \MH_i^{-1} \e_i$.

\begin{align}
    \x^i_{t, \tau_i} - \c^i_t &= \left(\mf{I} - \eta \MH_i \right)^{\tau_i} \left( \x^i_{t, 0} - \c^i_t \right), \\
    \x^i_{t, \tau_i} - \x^i_{t, 0} &= \left[ \left(\mf{I} - \eta \MH_i \right)^{\tau_i} - \mf{I} \right] \MH_i^{-1} \left( \e_i - \MH_i \x^i_{t, 0} \right) \\
    &= \K_i(\eta) \left( \e_i - \MH_i \x^i_{t, 0} \right),
\end{align}
where we define $\K_i(\eta) = \left[\mf{I} - \left(\mf{I} - \eta \MH_i \right)^{\tau_i} \right] \MH_i^{-1}$.

Aggregation:
\begin{align}
    \x_{t+1} - \x_{t} &= \sum_{i=1}^{m} \frac{\beta_i}{\beta} \left(\x^i_{t, \tau_i} - \x^i_{t, 0}\right) \\
    &= \sum_{i=1}^{m} \frac{\beta_i}{\beta} \K_i(\eta) \left( \e_i - \MH_i \x^i_{t, 0} \right), \\
    \x_{t+1} &= \left[\mf{I} - \sum_{i=1}^{m} \frac{\beta_i}{\beta} \K_i(\eta) \MH_i \right] \x_{t} + \sum_{i=1}^{m} \frac{\beta_i}{\beta} \K_i(\eta) \e_i, \\
    \x_{t+1} - \hat{\x} &= \left[\mf{I} - \sum_{i=1}^{m} \frac{\beta_i}{\beta} \K_i(\eta) \MH_i \right] \left(\x_{t} - \hat{\x}\right), \\
    \x_{t} &= \left[\mf{I} - \sum_{i=1}^{m} \frac{\beta_i}{\beta} \K_i(\eta) \MH_i \right]^{t} \left(\x_0 - \hat{\x}\right) + \hat{\x},
\end{align}
where $\hat{\x} := \left[\sum_{i=1}^{m} \frac{\beta_i}{\beta} \K_i(\eta) \MH_i\right]^{-1} \left[\sum_{i=1}^{m} \frac{\beta_i}{\beta} \K_i(\eta) \e_i\right] = \left[\sum_{i=1}^{m} \frac{\beta_i}{\beta} \left[\mf{I} - \left(\mf{I} - \eta \MH_i \right)^{\tau_i} \right] \MH_i^{-1} \MH_i\right]^{-1} \left[\sum_{i=1}^{m} \frac{\beta_i}{\beta} \left[\mf{I} - \left(\mf{I} - \eta \MH_i \right)^{\tau_i} \right] \MH_i^{-1} \e_i\right]$.

As $t$ goes to sufficiently large, we have
$$\lim_{t \rightarrow \infty} \x_t = \hat{\x}.$$

\end{proof}

\textbf{Theorem 2} (Convergence Rate).
Let $\{ \x_t \}$ be the global model generated by Algorithm~\ref{alg:adap_fl}.
    Under Assumptions~\ref{a_smooth}-~\ref{a_bounded} and a constant learning rate $\eta_t = \eta, \forall t \in [T]$, it holds that:
    \begin{multline}
        \min_{t \in [T]} \mb{E} \| \nabla F(\x_t) \|^2 \leq \underbrace{\frac{2 \left(F(\x_0) - F(\x_{*}) \right)}{T \eta}}_{\mathrm{optimization \, error}} + \underbrace{ L \eta \sigma^2 \sum_{i=1}^{m} \alpha_i^2}_{\mathrm{statistical \, error}} + \underbrace{m L^2 \eta^2 G^2 \sum_{i=1}^m (\alpha_i)^2 \left(\tau_i\right)^2}_{\mathrm{local \, update \, error}} + \underbrace{\frac{L \sigma_c^2}{ \eta \beta^2}}_{\substack{\mathrm{channel \,
        noise} \\ \mathrm{error}}},
    \end{multline}
    where $\left(\tau_i\right)^2 = \frac{\sum_{t=0}^{T-1} \left( \tau^i_t \right)^2}{T}$ and $\frac{1}{\bar{\beta}^2} = \frac{1}{T} \sum_{t=0}^{T-1} \frac{1}{\beta_t^2}$.

\begin{proof}

\begin{align}
    \x_{t+1} - \x_{t} &= \sum_{i=1}^{m} \frac{\beta_i}{\beta} \left(\x^i_{t, \tau^i_t} - \x^i_{t, 0}\right) + \tilde{\w}_t \\
    &= \sum_{i=1}^{m} \frac{\alpha_i}{\tau^i_t} \left(\x^i_{t, \tau^i_t} - \x^i_{t, 0}\right) + \tilde{\w}_t \\
    &= \sum_{i=1}^{m} \frac{\alpha_i}{\tau^i_t} \eta_t \sum_{k=0}^{\tau^i_t-1} \left(\nabla F_i(\x^i_{t, k}, \xi^i_{t, k})\right) + \tilde{\w}_t
\end{align}

Due to L-smoothness, we have one step descent in expectation conditioned on $\x_t$,
\begin{align}
    &\mb{E}_t [F(\x_{t+1})] - F(\x_t) \leq \left< \nabla F(\x_t), \mb{E}_t \left[\x_{t+1} - \x_t \right] \right> + \frac{L}{2} \mb{E}_t \left[\| \x_{t+1} - \x_t \|^2 \right] \\
    &= \left< \nabla F(\x_t), \sum_{i=1}^{m} \frac{\alpha_i}{\tau^i_t} \eta_t \sum_{k=0}^{\tau^i_t-1} \left(\nabla F_i(\x^i_{t, k})\right) \right> + \frac{L}{2} \mb{E}_t \left[\bigg\| \sum_{i=1}^{m} \frac{\alpha_i \eta_t}{\tau^i_t} \sum_{k=0}^{\tau^i_t-1} \left(\nabla F_i(\x^i_{t, k}, \xi^i_{t, k})\right)  + \tilde{\w}_t \bigg\|^2 \right] \\
    &= - \eta_t \| \nabla F(\x_t) \|^2 + \eta_t \left< \nabla F(\x_t), \nabla F(\x_t) - \sum_{i=1}^{m} \frac{\alpha_i}{\tau^i_t} \sum_{k=0}^{\tau^i_t-1} \left(\nabla F_i(\x^i_{t, k})\right) \right> \\
    &+ \frac{L}{2} \mb{E}_t \left[\bigg\| \sum_{i=1}^{m} \frac{\alpha_i \eta_t}{\tau^i_t} \sum_{k=0}^{\tau^i_t-1} \left(\nabla F_i(\x^i_{t, k}, \xi^i_{t, k})\right) + \tilde{\w}_t \bigg\|^2 \right] \\
    &=- \frac{1}{2} \eta_t \| \nabla F(\x_t) \|^2 - \frac{1}{2} \eta_t \bigg\| \sum_{i=1}^{m} \frac{\alpha_i}{\tau^i_t} \sum_{k=0}^{\tau^i_t-1} \left(\nabla F_i(\x^i_{t, k})\right) \bigg\|^2 + \frac{1}{2} \eta_t \mb{E}_t \bigg\| \nabla F(\x_t) - \sum_{i=1}^{m} \frac{\alpha_i}{\tau^i_t} \sum_{k=0}^{\tau^i_t-1} \left(\nabla F_i(\x^i_{t, k})\right) \bigg\|^2 \\
    &+ \frac{L}{2} \mb{E}_t \left[\bigg\| \sum_{i=1}^{m} \frac{\alpha_i \eta_t}{\tau^i_t} \sum_{k=0}^{\tau^i_t-1} \left(\nabla F_i(\x^i_{t, k}, \xi^i_{t, k})\right) + \tilde{\w}_t \bigg\|^2 \right] \\
    &=- \frac{1}{2} \eta_t \| \nabla F(\x_t) \|^2 - \frac{1}{2} \eta_t \bigg\| \sum_{i=1}^{m} \frac{\alpha_i}{\tau^i_t} \sum_{k=0}^{\tau^i_t-1} \left(\nabla F_i(\x^i_{t, k})\right) \bigg\|^2 + \frac{1}{2} \eta_t \mb{E}_t \bigg\| \sum_{i=1}^{m} \frac{\alpha_i}{\tau^i_t} \sum_{k=0}^{\tau^i_t-1} \left(\nabla F_i(\x_t) - \nabla F_i(\x^i_{t, k})\right) \bigg\|^2 \\
    &+ \frac{L \eta_t^2}{2} \mb{E}_t \left[\bigg\| \sum_{i=1}^{m} \frac{\alpha_i}{\tau^i_t} \sum_{k=0}^{\tau^i_t-1} \left(\nabla F_i(\x^i_{t, k}, \xi^i_{t, k})\right) \bigg\|^2 \right] + \frac{L \sigma_c^2}{2\beta_t^2} \\
    &\leq - \frac{1}{2} \eta_t \| \nabla F(\x_t) \|^2 + \frac{1}{2} \eta_t \mb{E}_t \bigg\| \sum_{i=1}^{m} \frac{\alpha_i}{\tau^i_t} \sum_{k=0}^{\tau^i_t-1} \left(\nabla F_i(\x_t) - \nabla F_i(\x^i_{t, k})\right) \bigg\|^2 \\
    &+ \frac{L \eta_t^2}{2} \mb{E}_t \left[\bigg\| \sum_{i=1}^{m} \frac{\alpha_i}{\tau^i_t} \sum_{k=0}^{\tau^i_t-1} \left(\nabla F_i(\x^i_{t, k}, \xi^i_{t, k})\right) - \sum_{i=1}^{m} \frac{\alpha_i}{\tau^i_t} \sum_{k=0}^{\tau^i_t-1} \left(\nabla F_i(\x^i_{t, k})\right) \bigg\|^2 \right] + \frac{L \sigma_c^2}{2\beta_t^2} \\
    &\leq - \frac{1}{2} \eta_t \| \nabla F(\x_t) \|^2 + \frac{1}{2} \eta_t \mb{E}_t \bigg\| \sum_{i=1}^{m} \frac{\alpha_i}{\tau^i_t} \sum_{k=0}^{\tau^i_t-1} \left(\nabla F_i(\x_t) - \nabla F_i(\x^i_{t, k})\right) \bigg\|^2 \\
    &+ \frac{L \eta_t^2}{2} \sum_{i=1}^{m} \mb{E}_t \left[\bigg\| \frac{\alpha_i}{\tau^i_t} \sum_{k=0}^{\tau^i_t-1} \left(\nabla F_i(\x^i_{t, k}, \xi^i_{t, k}) - \nabla F_i(\x^i_{t, k})\right) \bigg\|^2 \right] + \frac{L \sigma_c^2}{2\beta_t^2} \label{ineq1} \\
    &\leq - \frac{1}{2} \eta_t \| \nabla F(\x_t) \|^2 + \frac{1}{2} \eta_t \mb{E}_t \bigg\| \sum_{i=1}^{m} \frac{\alpha_i}{\tau^i_t} \sum_{k=0}^{\tau^i_t-1} \left(\nabla F_i(\x_t) - \nabla F_i(\x^i_{t, k})\right) \bigg\|^2 + \frac{L \eta_t^2}{2} \sum_{i=1}^{m} \alpha_i^2 \sigma^2 + \frac{L \sigma_c^2}{2\beta_t^2} \label{ineq: smooth}
\end{align}

The first inequality holds if $\eta_t \leq \frac{1}{L}$.

\begin{align}
    &\frac{1}{2} \eta_t \mb{E}_t \bigg\| \sum_{i=1}^{m} \frac{\alpha_i}{\tau^i_t} \sum_{k=0}^{\tau^i_t-1} \left(\nabla F_i(\x_t) - \nabla F_i(\x^i_{t, k})\right) \bigg\|^2 \leq
    \frac{1}{2} \eta_t m \sum_{i=1}^m \frac{(\alpha_i)^2}{(\tau^i_t)^2} \mb{E}_t \bigg\|  \sum_{k=0}^{\tau^i_t-1} \left(\nabla F_i(\x_t) - \nabla F_i(\x^i_{t, k})\right) \bigg\|^2 \\
    &\leq \frac{1}{2} \eta_t m \sum_{i=1}^m \frac{(\alpha_i)^2}{\tau^i_t} \sum_{k=0}^{\tau^i_t-1} \mb{E}_t \bigg\| \left(\nabla F_i(\x_t) - \nabla F_i(\x^i_{t, k})\right) \bigg\|^2 \\
    &\leq \frac{1}{2} \eta_t m L^2 \sum_{i=1}^m \frac{(\alpha_i)^2}{\tau^i_t} \sum_{k=0}^{\tau^i_t-1} \mb{E}_t \bigg\| \left(\x_t - \x^i_{t, k} \right) \bigg\|^2 \\
    &\leq \frac{1}{2} \eta_t^3 m L^2 \sum_{i=1}^m \frac{(\alpha_i)^2}{\tau^i_t} \sum_{k=0}^{\tau^i_t-1} \mb{E}_t \bigg\| \sum_{j=0}^k \nabla F_i(\x^i_{t, j}, \xi^i_{t, j}) \bigg\|^2 \\
    &\leq \frac{1}{2} \eta_t^3 m L^2 \sum_{i=1}^m \frac{(\alpha_i)^2}{\tau^i_t} \sum_{k=0}^{\tau^i_t-1} k^2 G^2 \\
    &\leq \frac{1}{2} \eta_t^3 m L^2 \sum_{i=1}^m (\alpha_i)^2\left(\tau^i_t\right)^2 G^2 \label{ineq: variance}
\end{align}

Plugging inequality~\eqref{ineq: variance} into \eqref{ineq: smooth}, we have 
\begin{align}
    &\mb{E}_t [F(\x_{t+1})] - F(\x_t) \leq \left< \nabla F(\x_t), \mb{E}_t \left[\x_{t+1} - \x_t \right] \right> + \frac{L}{2} \mb{E}_t \left[\| \x_{t+1} - \x_t \|^2 \right] \\
    &\leq - \frac{1}{2} \eta_t \| \nabla F(\x_t) \|^2 + \frac{1}{2} \eta_t^3 m L^2 \sum_{i=1}^m (\alpha_i)^2\left(\tau^i_t\right)^2 G^2 + \frac{L \eta_t^2}{2} \sum_{i=1}^{m} \alpha_i^2 \sigma^2 + \frac{L \sigma_c^2}{2\beta_t^2}
\end{align}

Rearranging and telescoping:
\begin{align}
    \frac{1}{T} \sum_{t=0}^{T-1} \eta_t \mb{E}_t \| \nabla F(\x_t) \|^2 &\leq \frac{2 \left(F(\x_0) - F(\x_T) \right)}{T} +  m L^2 \frac{1}{T} \sum_{t=0}^{T-1} \eta_t^3 \sum_{i=1}^m (\alpha_i)^2 \left( \tau^i_t \right)^2 G^2 + L \sigma^2 \sum_{i=1}^{m} \alpha_i^2 \frac{1}{T} \sum_{t=0}^{T-1} \eta_t^2 + L \sigma_c^2 \frac{1}{T} \sum_{t=0}^{T-1} \frac{1}{\beta_t^2}
\end{align}

Let $\eta_t = \eta$ be constant learning rate, $\left(\tau_i\right)^2 = \frac{\sum_{t=0}^{T-1} \left( \tau^i_t \right)^2}{T}$ then we have:
\begin{align}
    \frac{1}{T} \sum_{t=0}^{T-1} \mb{E} \| \nabla F(\x_t) \|^2 &\leq \frac{2 \left(F(\x_0) - F(\x_T) \right)}{T \eta} + m L^2 \eta^2 G^2 \sum_{i=1}^m (\alpha_i)^2 \left(\tau_i\right)^2 + L \eta \sigma^2 \sum_{i=1}^{m} \alpha_i^2  + \frac{L \sigma_c^2}{\eta \beta^2},
\end{align}
where $\frac{1}{\bar{\beta}^2} = \frac{1}{T} \sum_{t=0}^{T-1} \frac{1}{\beta_t^2}$.
\end{proof}

\end{document}